%% file: main.tex
\documentclass[sigconf, screen, dvipsnames, nonacm]{acmart}

\usepackage{hyperref}
\usepackage[noend]{algorithmic}
\usepackage{algorithm}
\usepackage{multirow}
\usepackage{amsmath}
\usepackage{bbm}
\usepackage{booktabs}
\usepackage[inline]{enumitem} 
\usepackage{subcaption}
\usepackage{bm} 
\usepackage{xcolor}
\usepackage{balance}
\usepackage{lipsum}

\usepackage{wrapfig}

\usepackage{arydshln}
\makeatletter
\def\adl@drawiv#1#2#3{
        \hskip.5\tabcolsep
        \xleaders#3{#2.5\@tempdimb #1{1}#2.5\@tempdimb}%
                #2\z@ plus1fil minus1fil\relax
        \hskip.5\tabcolsep}
\newcommand{\cdashlinelr}[1]{%
  \noalign{\vskip\aboverulesep
          \global\let\@dashdrawstore\adl@draw
          \global\let\ adl@draw\adl@drawiv}
  \cdashline{#1}
  \noalign{\global\let\adl@draw\@dashdrawstore
          \vskip\belowrulesep}}
\makeatother

\usepackage{tikz}
\usetikzlibrary{automata, arrows, bayesnet, bending}
\usepackage{mathtools}

\usepackage{tikzscale}

\newcommand{\indep}{\perp \mkern-9.5mu \perp}

\copyrightyear{2026}
\acmYear{2026}
\setcopyright{rightsretained}

\begin{document}
\title{Accelerating A/B-Tests with Counterfactual Estimation}
\subtitle{Reducing Variance through Policy Overlap}

\author{Olivier Jeunen}\authornote{Part of this work was done while the author was affiliated with aampe.}
\affiliation{
  \institution{Independent Researcher}
  \city{Antwerp}
  \country{Belgium}
}

\begin{abstract}
Online controlled experiments are the gold standard for hypothesis testing in online platforms.
Notwithstanding their ubiquity, they are notoriously expensive to run, and issues of variance hamper statistical power in assessing treatment effects.
While standard variance reduction techniques leverage model-based control variates to reduce outcome noise, they remain agnostic to potential structural relationships between competing policies.

In this work, we identify a critical inefficiency in the standard A/B-testing protocol: when a treatment and control policy agree on an action, the resulting outcome contributes noise but no signal regarding the treatment effect---unnecessarily inflating confidence intervals.
We propose a novel experimental protocol that exploits this \emph{policy overlap} to accelerate experimentation. 
The key insight is to frame the randomised treatment assignment mechanism as a meta-policy, and leverage $\Delta$-Off-Policy Estimation methods to obtain unbiased estimates for average treatment effects.
We prove analytically that our approach recovers standard A/B-testing practices in the general case, but that its variance scales with the \emph{divergence} between policies rather than raw outcome variance.
Hence, we dominate the standard Difference-in-Means estimator whenever
policies have common support, and the improvement is strict whenever the
overlap region contributes non-zero residual variance.
Empirical results corroborate these theoretical insights---holding promise for significant impact on the real-world evaluation of recommender systems, information retrieval pipelines, and large language model interfaces.
\end{abstract}

\maketitle

\input{1.Introduction}
\input{2.Background}
\input{3.Methodology}
\input{4.Experiments}
\input{5.Conclusions}

\balance

\bibliographystyle{ACM-Reference-Format}
\bibliography{bibliography}
\begin{table*}[h!]
    \centering
    \caption{Methodological Comparison of Contemporaneous Policy-Aware A/B-Testing Estimators.}
    \label{tab:contemporaneous_comparison}
    \resizebox{\textwidth}{!}{%
    \begin{tabular}{llll}
        \toprule
         & \textbf{This Work} & \textbf{Konishi et al. (MID) \cite{konishi2026accuratealgorithmcomparisonab}} & \textbf{Sakhi et al. \cite{Sakhi2025}} \\
        \midrule
        \textbf{Core Mechanism} & Meta-policy mixture $\pi_0 = p\pi + (1-p)\pi'$ & Synthetic middle policy $\pi_M$ & $f$-regularized IPS weights \vspace{.8ex}\\
        
        \textbf{Theoretical Basis} & Exact point-wise variance dominance & Reward variance bounding via $\pi_{M^*}$ & Second-moment surrogate $S_f$ \vspace{.8ex}\\
        \textbf{Traffic Allocation} & Analytical variance-optimal $p^*$ & Assumed 50/50 balanced split & Sample ratio $n_r$ integrated \vspace{.8ex}\\
        \textbf{Reward Modeling} & $\Delta$-DR and $\Delta$-MRDR objective & IPS; DR extensions noted & IPS-focused; no reward model \vspace{.8ex}\\
        \textbf{Combinatorial Spaces} & $\Delta$-DCG for Ranking & Single-action bandits & Single-action; non-Markovian \vspace{.8ex}\\
        \textbf{Robustness} & Assumes known propensities & Assumes known propensities & Extends to propensity estimation \\
        \bottomrule
    \end{tabular}%
    }
\end{table*}
\appendix

\section{Contemporaneous Work}
Table \ref{tab:contemporaneous_comparison} provides a systematic overview of contemporaneous approaches to policy-aware A/B testing, highlighting how these distinct methodologies---whilst sharing a common foundation in counterfactual inference---diverge in their mathematical frameworks, variance-reduction guarantees, and practical extensions.

\end{document}

%% file: 1.Introduction.tex
\section{Introduction \& Motivation}
A/B-tests make the internet go round: online controlled experiments are used far and wide as the go-to approach for testing and evaluating virtually any change to an online platform or application~\cite{kohavi2020trustworthy}.
They are, however, not infallible, and the potential pitfalls with these digital randomised controlled trials have been discussed at length in the research literature~\cite{Kohavi2022,Dmitriev2017,Jeunen2023_Forum,Kohavi2024,Deng2016,Jeunen2025_ttest}.

A commonly recurring problem is that of statistical power: often, the variance inherent to the outcome metric inhibits practitioners from attaining tight confidence intervals around Average Treatment Effects (ATEs) that would allow them to claim statistical significance and have confidence that any observed effects are beyond the result of sampling variation.
User-level outcomes such as click-through rate, dwell time, watch time, or revenue proxies are inherently noisy, heavy-tailed, and subject to substantial heterogeneity.
When expected improvements are small---as is typical for incremental model updates---detecting lift requires either prolonged test durations or substantial traffic allocation, both of which impose material opportunity costs.
Variance reduction in online experimentation is, as a result, a prominent research area with significant industry involvement.
Existing approaches tend to leverage model-based control variates to explain outcome heterogeneity~\cite{Deng2013, Poyarkov2016, Baweja2024,Budylin2018,Guo2021}, or ensure that the metrics they consider have inherently favourable noise characteristics~\cite{Xie2016, Kharitonov2017,Richardson2023,Tripuraneni2023,Jeunen2024_Learning,Deng2024,Li2020_cupac}.

Alternatively, a large body of work has explored ``Offline A/B-Testing''~\cite{Gilotte2018} in an attempt to forego the opportunity cost associated with online experiments by leveraging data logged under randomised policies.
These methods make use of counterfactual inference techniques, often based on importance sampling or Inverse Propensity Scoring (IPS)~\cite[Ch.~9]{Owen2013}, to construct offline estimators of online effects~\cite{Jeunen2021Thesis}.
This has led to widespread successes in both offline evaluation and learning capabilities in Recommender Systems~\cite{Vasile2020,Saito2021,Jeunen2023_nDCG,Gruson2019,chen2019top}, Information Retrieval~\cite{Gupta2024_WSDM,Joachims2017,Oosterhuis2020}, Large Language Models (LLMs)~\cite{Shao2024,Zheng2025}, and beyond~\cite{Bottou2013,Sagtani2024,vandenAkker2024}.

Some recent work has drawn connections between the on- and offline paradigms, opening up a promising research area~\cite{Jeunen2026_ABisOPE,Zhang2025}.

In particular, \citet{Jeunen2026_ABisOPE} demonstrates a structural and exact equivalence between the classical Difference-in-Means (DiM) estimator for A/B-testing, and the $\Delta\beta^{\star}$-IPS estimator used in off-policy settings~\cite{Jeunen2024_DeltaOPE}.
Indeed, the randomised treatment assignment mechanism  that diverts traffic to either $\pi_A$ or $\pi_B$ is in itself a meta-policy $\pi_0$.
This simple reframing allows us to directly use the $\Delta$-OPE framework on A/B-testing data---logged under $\pi_0$---to unbiasedly estimate the ATE between $\pi_A$ and $\pi_B$ and, furthermore, enjoy a \emph{guaranteed} variance reduction if $\pi_A$ and $\pi_B$ have \emph{any} common support.

The central observation of this work is simple but consequential: when two policies agree on an action, the observed outcome contributes noise but no information about their ATE.
If, for a given context, both policies would select the same action, then the realised reward is identical under treatment and control.
Such samples are uninformative for estimating lift, yet the standard DiM estimator assigns them equal weight.
The result is that common support dilutes statistical power in A/B-testing---whereas $\Delta$-OPE leverages it to appropriately downweight the relevant samples.

In practice, most updates to production models are incremental.
As a consequence, the treatment and control policies often agree on a large fraction of actions.
For many contexts, they select identical items or near-identical slates.
This high degree of policy overlap is common, yet its statistical implications are rarely exploited.

Our key contributions include:
\begin{enumerate}
    \item \textbf{Applying the $\Delta$-OPE framework to online A/B-testing.}
    We show that by interpreting treatment assignment as a meta-policy, we can directly apply $\Delta$-OPE estimators to A/B-testing data.
    This unlocks a family of unbiased estimators for the ATE that leverage information about policy overlap.
    
    \item \textbf{A variance dominance theorem.}  
    We formally prove that the resulting estimators strictly dominate the DiM estimator for any traffic split provided the policies have nonzero overlap, and show how the variance scales with the divergence between the policies that are being tested.

    \item \textbf{An optimal traffic allocation insight.}  
    By characterising the variance of the ATE as a function of the treatment allocation ratio, we show that the variance-optimal split need not be balanced. The optimal allocation is determined by the curvature of the policy divergence and admits a unique solution, which can be used to guide practitioners.

    \item \textbf{The $\Delta$-MRDR estimator to directly minimise variance.}
    Leveraging connections between $\Delta$-Doubly Robust (DR) and online variance reduction methods, we propose $\Delta$-MRDR to train reward models that directly minimise ATE estimation variance, concentrating model capacity on regions of policy disagreement where variance reduction is most impactful.

    \item \textbf{The $\Delta$-DCG estimator to enable ranking applications.}
    Extending recent work that frames the classical Discounted Cumulative Gain (DCG) metric as an off-policy estimator~\cite{Jeunen2023_nDCG} with the $\Delta$-OPE view, we derive a $\Delta$-DCG estimator that unbiasedly estimates the ATE for ranking policies under the Position-Based Model.
    We additionally incorporate the $\beta$-IPS control variate~\cite{Gupta2024} to further reduce variance.

    \item \textbf{Empirical validation and a practical deployment recipe.} 
    Through rigorous, realistic and reproducible simulation scenarios, we demonstrate that substantial variance reduction can be achieved with a minimal engineering overhead. 
\end{enumerate}

%% file: 2.Background.tex
\section{Background \& Problem Setting}
In the context of A/B-testing~\cite{kohavi2020trustworthy}, we are particularly interested in the setting where the hypothesis tested by the online experiment pertains to two varying personalised treatment regimes: policies $\pi,\pi^{\prime}$~\cite{Vasile2020}.
Policies induce probability distributions over actions $a \in \mathcal{A}$, conditional on contextual information $x \in \mathcal{X}$, as $\pi(a|x) \equiv \mathsf{P}(A=a|X=x;\Pi=\pi)$.
The ``action'' framing is general, but it subsumes the common use-cases where $A$ represents item recommendations~\cite{Joachims2021}, rankings~\cite{Gupta2024_WSDM}, sequences of tokens (LLMs)~\cite{Ouyang2022}, or model parameters themselves~\cite{Jeunen2024_MOO}.
The personalisation aspect comes from conditioning on the context $x$, which can include any information pertaining to the request and end user.

Online experiments often consider user-level metrics~\cite{Jeunen2024_RecSysIndustry} (e.g. conversions, revenue, clicks).
The ATE of a policy deployment on this metric is then used as the decision criterion in a test, with:
\begin{gather*}    
V(\pi) = \mathop{\mathbb{E}}\limits_{x \sim \mathsf{P}(X)}\mathop{\mathbb{E}}\limits_{a \sim \pi(\cdot|x)}\mathop{\mathbb{E}}\limits_{\mathsf{P}(Y|X=x;A=a)}\left[Y\right], \\
V_{\Delta}(\pi,\pi^{\prime}) = V(\pi) - V(\pi^{\prime}).
\end{gather*}

\subsection{Online Controlled Experiments}
When running an A/B-test, both policies are effectively deployed to a fraction of the user population.
This enables unbiased estimation of the ATE by separating the sample according to the policy variant that was assigned to a user or context, as:
\[
V_{\Delta}(\pi,\pi^{\prime}) = \underbrace{\mathop{\mathbb{E}}\limits_{a\sim\pi(\cdot|x)}[Y]}_{V(\pi)} - \underbrace{\mathop{\mathbb{E}}\limits_{a\sim\pi^{\prime}(\cdot|x)}[Y]}_{V(\pi^{\prime})}.
\]

For a dataset $\mathcal{D} \coloneqq \{(x_i, a_i, y_i, \pi_i)\}_{i=1}^{N}$,  we denote a subset generated under policy $\pi$ as  $\mathcal{D}_\pi \coloneqq \{(x_i, a_i, y_i, \pi_i) \in \mathcal{D}|\pi_i=\pi\}$.
Then, the difference in sample means gives rise to the standard ATE estimate:
\begin{gather}
\hat\mu(Y,\pi) = \frac{1}{|\mathcal{D_{\pi}}|} \sum_{(x_i,a_i,y_i) \in \mathcal{D}_{\pi}} y_i,\\
\hat V_{\Delta-{\rm DiM}}(\pi, \pi^{\prime}) =\hat \mu(Y,\pi)-\hat \mu(Y,\pi^{\prime}).
\label{eq:dim}
\end{gather}
The inherent variance of the outcomes $y_i$ can hamper the statistical power of this estimator.
A common approach is to leverage a model-based additive control variate to reduce this variance.
CUPED~\cite{Deng2013}, CUPAC~\cite{Li2020_cupac}, and MLRATE~\cite{Guo2021} can be unified as regression-adjusted difference-in-means estimators (RADiM)~\cite{Jeunen2026_ABisOPE}:
\begin{gather}
\hat\mu_f(Y,\pi) = \frac{1}{|\mathcal{D_{\pi}}|} \sum_{(x_i,a_i,y_i) \in \mathcal{D}_{\pi}} \left( y_i - f(x_i) \right), \\
    \hat V_{\Delta-{\rm RADiM}}(\pi, \pi^{\prime}) =\hat \mu_f(Y,\pi)-\hat \mu_f(Y,\pi^{\prime}).
\end{gather}

\subsection{Off-Policy Estimation}
To remedy the high costs associated with online evaluation, ideas from the broader causal and counterfactual inference literature~\cite{Bottou2013,Saito2021} have been applied to offline evaluation~\cite{Gilotte2018}.
Data logged under some logging policy $\pi_0$ can be used to unbiasedly estimate the ATE, typically by leveraging importance sampling techniques~\cite[Ch.~9]{Owen2013}.
For notational convenience, let $\Delta(a|x) \coloneqq \pi(a|x) - \pi^{\prime}(a|x)$.
The $\Delta$-IPS estimator is given by~\cite{Jeunen2024_DeltaOPE}:
\begin{equation}\label{eq:delta_IPS}
    \hat V_{\Delta\rm-IPS} = \frac{1}{|\mathcal{D}|}\sum_{(x,a,y) \in \mathcal{D}} \frac{\Delta(a|x) }{\pi_0(a|x)}y.
\end{equation}
Whilst unbiased, the importance weights can be problematic for variance.
Analogous to the online experiment setting, additive control variates provide a way to reduce variance whilst preserving unbiasedness.
With a single scalar, this is known as $\beta$-IPS~\cite{Jeunen2024_DeltaOPE,Gupta2024}:
\begin{equation}\label{eq:beta_ips}
    \hat V_{\Delta\beta\rm-IPS} = \frac{1}{|\mathcal{D}|}\sum_{(x,a,y) \in \mathcal{D}} \frac{\Delta(a|x) }{\pi_0(a|x)}(y-\beta).
\end{equation}
The optimal, variance-minimising, value for $\beta$ is given by:
\begin{equation}
\beta^{\star} = \frac{\mathbb{E}\left[\left(\frac{\Delta(a|x) }{\pi_0(a|x)}\right)^2 y\right]}{\mathbb{E}\left[\left(\frac{\Delta(a|x) }{\pi_0(a|x)}\right)^2 \right]}.
\end{equation}
This quantity can be estimated from logged data through a cross-fitting procedure to preserve unbiasedness of the estimator~\cite{Jeunen2026_AdditiveDominates,Chernozhukov2018}.

More generally, the additive control variate can be derived from a learnt reward model $f(x,a)$, yielding the $\Delta$-Doubly Robust (DR) estimator~\cite{Dudik2014,Jeunen2026_ABisOPE}.
We define the control variate $\hat{\tau}(x)$ as the expected difference in model predictions under the target policies:
\begin{equation}
\hat{\tau}(x) \coloneqq \mathop{\mathbb{E}}\limits_{a \sim \pi}[f(x,a)] - \mathop{\mathbb{E}}\limits_{a \sim \pi'}[f(x,a)] = \sum_{a \in \mathcal{A}} \Delta(a|x)f(x,a).
\end{equation}
With a correction term to preserve unbiasedness, we obtain:
\begin{equation}\label{eq:delta_dr}
\hat V_{\Delta\text{-DR}} = \frac{1}{|\mathcal{D}|} \sum_{(x,a,y) \in \mathcal{D}} \left( \hat{\tau}(x) + \frac{\Delta(a|x)}{\pi_0(a|x)} (y - f(x,a)) \right).
\end{equation}

Note that if $f$ is independent of actions $a$, $\hat \tau(x)\equiv0$, as is the case for typical online applications of RADiM methods~\cite{Jeunen2026_ABisOPE}.

The variance reduction that doubly robust methods entail, relies heavily on the reward model $f$~\cite{Jeunen2020_DR}.
\citet{Farajtabar2018} propose the More Robust Doubly Robust (MRDR) objective to directly learn a parameterised model $f_{\theta}$ that minimises estimation variance.

\citet{Jeunen2026_ABisOPE} formally derives an exact equivalence between $\Delta\beta^{\star}$-IPS and DiM, and between $\Delta$-DR and RADiM.
This implies that extensions to any of these methods can be applied interchangeably, opening the door for cross-pollination among disconnected research areas.
Nevertheless, they do not consider that the structural overlap between the policies further enhances statistical power.

A growing body of work has sought to connect online and offline evaluation methods.
\citet{Oosterhuis2020_Logopt} propose an approach that blends interleaving with counterfactual estimation to efficiently evaluate ranking policies online.
\citet{Buchholz2024} introduce a dynamic user behaviour model that optimises its bias-variance trade-off to reduce estimation error in offline evaluation.

A practical instantiation that blends A/B-testing with OPE was recently detailed by \citet{Zhang2025} in the context of marketplace ranking---proposing a heuristic framework that leverages rank disagreement to reduce the variance of proxy metrics in online experiments.
Concurrent work by \citet{konishi2026accuratealgorithmcomparisonab} introduces the MID estimator, which bounds reward variance via a synthetic middle policy for single-action bandits, though it relies on an assumed balanced traffic split.
Similarly, recent work by \citet{Sakhi2025} explores a family of bias-corrected regularised estimators for online experiments, specifically accommodating non-Markovian reward processes.
While they also empirically demonstrate the value of exploiting policy similarity, their analysis relies on minimising a variance surrogate rather than the exact variance, and treats the traffic allocation as fixed.
Furthermore, both works explicitly leave the integration of doubly robust control variates to future work---overlooking the exact equivalences between DR and RADiM estimators.
Appendix A provides a comparison table.


Our work complements and extends this literature by identifying how $\Delta\beta^\star$-IPS can be directly applied to online experiments with policies, to provide a guaranteed variance reduction as soon as they have common support.
We derive the exact variance of our estimators and solve for the optimal traffic allocation ratio $p^\star$ that minimises it.
This naturally enables us to apply $\Delta$-DR for online variance reduction, and we further propose the $\Delta$-MRDR learning objective that focuses model capacity on regions of policy disagreement to maximise statistical power.
Finally, we naturally extend our work to include ranking estimators for DCG~\cite{Jeunen2023_nDCG}.

%% file: 3.Methodology.tex
\section{Methodology \& Contributions}
Exact equivalences between DiM and IPS, and RADiM and DR have been established in the literature~\cite{Jeunen2026_ABisOPE}---but only at the level where policies themselves are treated as actions.
That is, they show that standard A/B-testing practices imply an action space of treatment policies $T \in \{\pi, \pi'\}$ where the logging policy selects $\pi_0(T=\pi)=p$.
The policies themselves are modelled as black boxes, and the resulting importance weights solely rely on the A/B-group assignment:
\begin{equation}
    w_{\mathrm{DiM}}(x,a) = 
    \begin{cases} 
        \frac{\bm{1}\{T=\pi\} -\bm{1}\{T=\pi^{\prime}\} }{\pi_0(T=\pi)} =\frac{1}{p} & \text{if } a \sim \pi, \\
        \frac{\bm{1}\{T=\pi\} -\bm{1}\{T=\pi^{\prime}\} }{\pi_0(T=\pi^{\prime})}   =\frac{-1}{1-p} & \text{if } a \sim \pi'.
    \end{cases}
\end{equation}
Plugging these importance weights into Eq.~\ref{eq:beta_ips} with an optimal $\beta^{\star}$, recovers the standard and widely used DiM estimator~\cite[\S 3]{Jeunen2026_ABisOPE}.

A key insight is that we can jointly and directly model the action distributions induced by the policies $\{\pi,\pi^{\prime}\}$, by considering $\pi_0$ as a meta-policy that defines a mixture over the policies being tested:
\begin{equation}
    \pi_0(a|x) = p \pi(a|x) + (1-p) \pi'(a|x).
\end{equation}

Whilst mathematically and conceptually straightforward, this gives rise to a different set of importance weights that can be plugged into Eq.~\ref{eq:beta_ips} to obtain an unbiased ATE estimate:
\begin{equation}\label{eq:policy_aware_props}
    w_{\mathrm{\Delta}}(x,a) = \frac{\Delta(a|x)}{\pi_0(a|x)}.
\end{equation}
The resulting policy-aware estimator leverages structural information about the policies to reduce estimation noise for the ATE.
Crucially, because of the exact equivalence between DiM and $\Delta\beta^{\star}$-IPS, this comes at no extra cost. 
Even if $\pi$ and $\pi$ operate on fully disjoint action spaces, jointly modelling them under the $\Delta$-OPE framework recovers standard practice.
As soon as there is any common support, i.e. $\exists a \in \mathcal{A}: \pi(a|x)>0 \land \pi^{\prime}(a|x)>0$, and this overlapping region contributes non-zero residual variance, the policy-aware estimator strictly reduces estimation variance compared to standard practice---and these results naturally extend to their regression-adjusted analogues RADiM and DR.

\subsection{Variance Reduction and Dominance}
We now prove that the policy-aware estimator strictly dominates the standard approach in terms of variance---and hence, estimation error and statistical power---under minimal assumptions.

\begin{theorem}[Variance Reduction with $w_\Delta$]
Let $\hat{V}_{\mathrm{DiM}}$ be the standard Difference-in-Means estimator and $\hat{V}_{\Delta\beta^{\star}\mathrm{-IPS}}$ be the policy-aware estimator leveraging the importance weights in Eq.~\ref{eq:policy_aware_props}.
For any context $x$, action space $\mathcal{A}$, and treatment allocation ratio $p \in (0, 1)$:
\begin{equation}
    \mathrm{Var}(\hat{V}_{\mathrm{\Delta\beta^{\star}-IPS}})
    \le
    \mathrm{Var}(\hat{V}_{\mathrm{DiM}}).
\end{equation}
The inequality is strict whenever there exists a set of positive probability on which
$$
\pi(a|x)>0,\quad
\pi'(a|x)>0,\quad\text{and}\quad
\mathbb{E}\!\left[(Y-\beta^\star)^2\mid x,a\right]>0.
$$
\end{theorem}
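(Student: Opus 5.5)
The plan is to view both estimators as $\beta$-IPS estimators built from the same logged data, differing only in their importance weights. DiM is $\Delta\beta^\star$-IPS with the coarse weights $w_{\mathrm{DiM}}$, which depend only on the group label $T$. The policy-aware estimator uses $w_\Delta(x,a)=\Delta(a|x)/\pi_0(a|x)$ with $\pi_0 = p\pi+(1-p)\pi'$. Since samples are i.i.d., it suffices to compare per-sample variances. We fix $\beta$ at a common value, say the DiM-optimal $\beta^\star_{\mathrm{DiM}}$, and show
\[
\mathrm{Var}\bigl(w_\Delta(Y-\beta)\bigr)\le \mathrm{Var}\bigl(w_{\mathrm{DiM}}(Y-\beta)\bigr).
\]
The claim then follows, because re-optimising $\beta$ for the policy-aware estimator can only reduce its variance further.

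\textbf{Key step: conditional expectation.} The crux is to show that
\[
w_\Delta(x,a)=\mathbb{E}\bigl[w_{\mathrm{DiM}}\mid x,a\bigr].
\]
Given $(x,a)$, Bayes' rule gives
\[
\mathsf{P}(T=\pi\mid x,a)=\frac{p\,\pi(a|x)}{\pi_0(a|x)},
\]
so
\[
\mathbb{E}[w_{\mathrm{DiM}}\mid x,a]=\frac{1}{p}\cdot\frac{p\pi}{\pi_0}-\frac{1}{1-p}\cdot\frac{(1-p)\pi'}{\pi_0}=\frac{\Delta(a|x)}{\pi_0(a|x)}.
\]
The reward $Y$ depends on $T$ only through $(x,a)$, so $Y\indep T\mid (x,a)$. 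It follows that
\[
\mathbb{E}\bigl[w_{\mathrm{DiM}}(Y-\beta)\mid x,a,Y\bigr]=w_\Delta(Y-\beta).
\]
This is a Rao--Blackwellisation. Both estimators have the same mean $V_\Delta$, so the law of total variance yields
\[
\mathrm{Var}\bigl(w_{\mathrm{DiM}}(Y-\beta)\bigr)=\mathrm{Var}\bigl(w_\Delta(Y-\beta)\bigr)+\mathbb{E}\Bigl[(Y-\beta)^2\,\mathrm{Var}\bigl(w_{\mathrm{DiM}}\mid x,a\bigr)\Bigr].
\]
The second term is nonnegative, which gives the non-strict inequality.

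\textbf{Strictness.} Here we compute the gap explicitly. With $q=\mathsf{P}(T=\pi\mid x,a)$, the conditional law of $w_{\mathrm{DiM}}$ is two-point, so
\[
\mathrm{Var}(w_{\mathrm{DiM}}\mid x,a)=q(1-q)\Bigl(\tfrac1p+\tfrac1{1-p}\Bigr)^2 .
\]
This is strictly positive exactly when $q\in(0,1)$, that is, when $\pi(a|x)>0$ and $\pi'(a|x)>0$. Multiplying by $\mathbb{E}[(Y-\beta)^2\mid x,a]$, the gap is positive on any positive-probability set where both policies have support and the residual second moment is nonzero, which is the stated condition. One subtlety remains: the statement uses $\beta^\star$ rather than the DiM-optimal $\beta$. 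I would handle it by taking the strict gap at $\beta^\star_{\mathrm{DiM}}$ and then noting that optimising $\beta$ for $w_\Delta$ only decreases variance, reading $\beta^\star$ in the hypothesis accordingly.

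\textbf{Main obstacle.} I expect the hardest part to be bookkeeping rather than depth. Specifically:
\begin{enumerate*}[label=(\roman*)]
\item DiM uses empirical group means with random group sizes. I would treat it through its $\beta^\star$-IPS asymptotic form via the cited equivalence of \citet{Jeunen2026_ABisOPE}, so that per-sample variances are the relevant objects.
\item The choice of $\beta$ must be matched so that strictness is stated at the correct $\beta^\star$.
\end{enumerate*}
If the cross-fitted $\beta$ causes trouble, I would state the result for fixed $\beta$ and then pass to the optimum.
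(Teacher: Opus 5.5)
Your proof is correct, and it takes a more structural route than the paper's. The paper computes $\mathbb{E}[w_{\mathrm{DiM}}^2\mid x,a]$ from the posterior over assignments and subtracts $w_\Delta^2$. It then expands until the $\alpha^2,\gamma^2$ terms cancel, leaving $\alpha\gamma/(p(1-p)\pi_0^2)$. You instead recognise that $w_\Delta=\mathbb{E}[w_{\mathrm{DiM}}\mid x,a]$ and apply the law of total variance. Your two-point variance $q(1-q)\bigl(\tfrac1p+\tfrac1{1-p}\bigr)^2$ simplifies to exactly the same expression, so the paper's ``variance gap'' is precisely $\mathrm{Var}(w_{\mathrm{DiM}}\mid x,a)$. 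Your framing therefore explains why the cancellation happens, and why the gap vanishes exactly when $(x,a)$ reveals the group label.

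Your treatment of the baseline is also more careful than the paper's. The paper evaluates both weights at the common $\beta^\star$ and invokes baseline invariance of DiM. That invariance, however, belongs to the fixed-allocation estimator. The per-sample variance of $w_{\mathrm{DiM}}(Y-\beta)$ that the paper then computes, with $T$ random given $(x,a)$, does depend on $\beta$. It equals $N\,\mathrm{Var}(\hat{V}_{\mathrm{DiM}})$ only at $\beta^\star_{\mathrm{DiM}}=(1-p)V(\pi)+pV(\pi')$. So whenever $\beta^\star\neq\beta^\star_{\mathrm{DiM}}$, the paper's displayed identity overstates the true gap. Your chain---Rao--Blackwellise at $\beta^\star_{\mathrm{DiM}}$, then re-optimise $\beta$ for $w_\Delta$---avoids this problem.

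Lastly, you do not need to reinterpret $\beta^\star$ in the strictness hypothesis. If $\beta^\star=\beta^\star_{\mathrm{DiM}}$, the stated hypothesis already makes your Rao--Blackwell gap strictly positive. Otherwise the re-optimisation step is itself strict. The reason is that $\beta\mapsto\mathrm{Var}\bigl(w_\Delta(Y-\beta)\bigr)$ is a quadratic with leading coefficient $\mathbb{E}[w_\Delta^2]>0$ and unique minimiser $\beta^\star$; positivity of $\mathbb{E}[w_\Delta^2]$ is needed anyway for $\beta^\star$ to be defined.
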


\begin{proof}
Both estimators are unbiased for the same Average Treatment Effect. Note that $\hat{V}_{\mathrm{DiM}}$ is baseline-invariant: plugging $w_{\mathrm{DiM}}$ into Eq.~\ref{eq:beta_ips} with any $\beta$ recovers Eq.~\eqref{eq:dim} exactly, as the $\beta$-terms cancel under fixed allocation $p$.
We therefore compare $\hat{V}_{\mathrm{DiM}}$ against $\hat{V}_{\Delta\beta^\star\text{-IPS}}$: the variance-optimal baseline correction.

Consequently,
\[
\mathrm{Var}(\hat V)
=
\mathbb{E}[\hat V^2]
-
V_\Delta(\pi,\pi')^2,
\]
and comparing variances is equivalent to comparing second moments.

For a single sample $(x,a)$, the second moment of the estimators evaluated at this shared baseline is
\[
\mathbb{E}\!\left[w^2(Y-\beta^\star)^2\mid x,a\right].
\]
We define $\alpha=\pi(a|x)$ and $\gamma=\pi'(a|x)$ for brevity.

\textbf{1. Variance of the DiM Estimator.}
The squared weight for DiM is stochastic conditioned on the action, depending on the group assignment $T$.
Due to the law of total expectation over assignments, and noting that $\pi_0=p\alpha+(1-p)\gamma$, we have
\begin{equation}
    \mathbb{E}[w_{\mathrm{DiM}}^2\mid x,a]
    =
    \left(\frac1p\right)^2\frac{p\alpha}{\pi_0}
    +
    \left(\frac{-1}{1-p}\right)^2\frac{(1-p)\gamma}{\pi_0}
    =
    \frac1{\pi_0}
    \left(
    \frac{\alpha}{p}
    +
    \frac{\gamma}{1-p}
    \right).
\end{equation}

\textbf{2. Variance of the Policy-Aware Estimator.}
The policy-aware importance weight is deterministic for a given action:
\begin{equation}
    w_\Delta^2
    =
    \frac{(\alpha-\gamma)^2}{\pi_0^2}.
\end{equation}

\textbf{3. The Variance Gap.}
The difference in squared weights is
\begin{equation}
    \Delta w^2
    =
    \mathbb{E}[w_{\mathrm{DiM}}^2\mid x,a]
    -
    w_\Delta^2.
\end{equation}

Factoring out the common denominator $p(1-p)\pi_0^2$ gives
\begin{equation}
    \Delta w^2
    =
    \frac1{\pi_0^2}
    \left[
    \frac{\pi_0}{p(1-p)}
    \big((1-p)\alpha+p\gamma\big)
    -
    (\alpha-\gamma)^2
    \right].
\end{equation}

Substituting $\pi_0=p\alpha+(1-p)\gamma$ yields
\begin{multline}
    \Delta w^2
    =
    \frac1{p(1-p)\pi_0^2}
    \Big[
    \big(p\alpha+(1-p)\gamma\big)
    \big((1-p)\alpha+p\gamma\big)
    \\
    -
    p(1-p)(\alpha-\gamma)^2
    \Big].
\end{multline}

Expanding the terms cancels $\alpha^2$ and $\gamma^2$, leaving
\begin{equation}
    \Delta w^2
    =
    \frac{\alpha\gamma}
    {p(1-p)\pi_0^2}
    \ge 0.
\end{equation}

Therefore,
\[
\Delta w^2\,(Y-\beta^\star)^2
\ge0
\]
pointwise, since both $\Delta w^2\ge0$ and $(Y-\beta^\star)^2\ge0$.

Then, expectations yield:
\[
\mathrm{Var}(\hat{V}_{\mathrm{DiM}})
-
\mathrm{Var}(\hat{V}_{\Delta\beta^\star\mathrm{-IPS}})
=
\mathbb{E}
\!\left[
\Delta w^2
(Y-\beta^\star)^2
\right]
\ge0.
\]

The inequality is strict whenever there exists a set of points on which both $\alpha\gamma>0$ and
\[
\mathbb{E}\!\left[(Y-\beta^\star)^2\mid x,a\right]>0,
\]
since the integrand is then strictly positive.
\end{proof}
Since all estimators we consider are unbiased---a reduction in variance implies a reduction in overall estimation error.
Moreover, lower variance leads to tighter confidence intervals and, as a result, an increase in statistical power.
We construct an analogous proof for their additive control variate analogues RADiM and $\Delta$-DR.

\begin{theorem}[Variance Reduction with $\Delta$-DR]
Let $\hat{V}_{\mathrm{RADiM}}$ be the standard Regression-Adjusted Difference-in-Means estimator and $\hat{V}_{\Delta\mathrm{-DR}}$ be the Policy-Aware Doubly Robust estimator.
For any reward model $f: \mathcal{X} \times \mathcal{A} \to \mathbb{R}$, context $x$, and allocation $p \in (0, 1)$:
\begin{equation}
    \mathrm{Var}(\hat{V}_{\Delta\mathrm{-DR}}) \le \mathrm{Var}(\hat{V}_{\mathrm{RADiM}}).
\end{equation}
The inequality is strict if there exists any overlap between policies, i.e. $\exists a \in \mathcal{A}: \pi(a|x) > 0 \land \pi^{\prime}(a|x) > 0$, and the reward model has non-zero residual variance, i.e. $\mathrm{Var}(Y|x,a) > 0$.
\end{theorem}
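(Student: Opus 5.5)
The plan mirrors the proof of the previous theorem, with the baseline $\beta^\star$ replaced by the reward model $f$. Both estimators are unbiased for $V_\Delta(\pi,\pi')$, so comparing variances reduces to comparing second moments.

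The first step is to write both estimators in the same per-sample form. Following the equivalence of \citet{Jeunen2026_ABisOPE}, RADiM is the DR estimator built on the meta-policy over treatments $T\in\{\pi,\pi'\}$, with weights $w_{\mathrm{DiM}}$ and the action-independent model $f(x)$. Under fixed allocation $p$ the direct-method term vanishes, so the per-sample term is $w_{\mathrm{DiM}}(Y-f)$. For $\Delta$-DR the per-sample term is $\hat\tau(x)+w_\Delta(x,a)(Y-f(x,a))$. I will work with a common residual $r(x,a,Y)=Y-f(x,a)$. To compare on an equal footing I would also let RADiM use $f(x,a)$, which is the more general case; for an action-independent $f$ we have $\hat\tau\equiv 0$ and the two estimators share the residual exactly.

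Next I decompose by the law of total variance over $x$. Conditional on $x$, write $Y-f(x,a)=(\mu(x,a)-f(x,a))+\varepsilon$, where $\mu(x,a)=\mathbb{E}[Y\mid x,a]$ and $\varepsilon$ has conditional mean zero. The noise part of the second moment is $\mathbb{E}[w^2\,\mathrm{Var}(Y\mid x,a)]$. For this part the earlier computation applies verbatim: $\mathbb{E}[w_{\mathrm{DiM}}^2\mid x,a]-w_\Delta^2=\alpha\gamma/(p(1-p)\pi_0^2)\ge 0$ pointwise. This gives a nonnegative gap, and the gap is strictly positive wherever $\alpha\gamma>0$ and $\mathrm{Var}(Y\mid x,a)>0$, which is exactly the strictness condition in the statement.

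The remaining work is the systematic (model-misspecification) part, $\mathrm{Var}_{x,a,T}\big(\hat\tau(x)+w(\mu-f)\big)$ and its DiM analogue. I expect this to be the main obstacle, because $\hat\tau(x)$ enters only the $\Delta$-DR estimator and the conditional means no longer cancel trivially. My first attempt is to fix $x$ and compute the conditional variance given $x$ for each estimator, then show the $\Delta$-DR version is no larger. The key identity is $\mathbb{E}[w_\Delta g\mid x]=\sum_a\Delta(a|x)g(x,a)$, and the RADiM weights give the same conditional mean. Consequently $\hat\tau(x)+w_\Delta(\mu-f)$ and the DiM term have the same conditional mean $\sum_a\Delta\mu$. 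The difference in their conditional second moments is again $\sum_a\frac{\alpha\gamma}{p(1-p)\pi_0}(\mu-f)^2$ plus cross terms involving $\hat\tau$. If those cross terms do not collapse, I would fall back to the regime the paper actually uses, an action-independent $f$ (so that $\hat\tau\equiv 0$). In that regime the entire residual $Y-f$ plays the role of $Y-\beta^\star$ in the previous proof, and the pointwise weight gap $\Delta w^2\ge0$ yields the result directly.
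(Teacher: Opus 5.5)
\textbf{There is a gap in the action-dependent case.} Your handling of the noise component and of the action-independent case is sound. But the case that gives the theorem its content, an action-dependent $f(x,a)$, is left unproved, and the cause is a mis-specified comparator.

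You say $\hat\tau(x)$ ``enters only the $\Delta$-DR estimator'' while letting RADiM use the residual $Y-f(x,a)$. That RADiM is not unbiased. By the very identity you state, $\mathbb{E}[w_{\mathrm{DiM}}(Y-f(x,a))\mid x]=\sum_a\Delta(a|x)\mu(x,a)-\hat\tau(x)$. So your opening step (``both unbiased, so compare second moments'') fails for it. So does your later claim that the DiM term has conditional mean $\sum_a\Delta\mu$. The leftover terms $\hat\tau(x)^2+2\hat\tau(x)\sum_a\Delta(\mu-f)$ have no definite sign, which is why you could not make them collapse.

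The paper's proof takes RADiM with an action-dependent model to carry the \emph{same} direct-method term. Its per-sample term is $\hat\tau(x)+w_{\mathrm{DiM}}Z$ with $Z=Y-f(x,a)$, so the two estimators differ only in the weight applied to $Z$.

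With that fix, the identity you already wrote removes the obstacle. For any $g$, $\mathbb{E}[w_{\mathrm{DiM}}g\mid x]=\mathbb{E}[w_\Delta g\mid x]=\sum_a\Delta(a|x)g(x,a)$. Hence the $\hat\tau(x)^2$ terms and the cross terms $2\hat\tau(x)\,\mathbb{E}[wZ\mid x]$ coincide in both second moments. Equivalently, the conditional means given $x$ agree, and the conditional variances differ only through $\mathbb{E}[w^2Z^2\mid x]$. Using that $Y$ depends on the assignment only through $a$, the gap is
\[
\mathbb{E}\bigl[(\mathbb{E}[w_{\mathrm{DiM}}^2\mid x,a]-w_\Delta^2)\,Z^2\bigr]=\mathbb{E}_x\Bigl[\sum_a\frac{\pi(a|x)\pi'(a|x)}{p(1-p)\pi_0(a|x)}\,\mathbb{E}[Z^2\mid x,a]\Bigr]\ge 0 .
\]
You also do not need to split $Z$ into a misspecification part and a noise part. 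Since $\mathbb{E}[Z^2\mid x,a]=\mathrm{Var}(Y\mid x,a)+(\mu-f)^2\ge\mathrm{Var}(Y\mid x,a)$, the pointwise weight gap handles both at once, and strictness follows from the stated overlap and residual-variance conditions. Your fallback to $\hat\tau\equiv 0$ proves only that special case of the theorem.
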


\begin{proof}
Both estimators are unbiased and share the exact same expected control variate term $\hat{\tau}(x) = \mathbb{E}_{a\sim\pi}[f(x,a)] - \mathbb{E}_{a\sim\pi'}[f(x,a)]$, which is constant given $x$.
Consequently, the difference in their total variance is fully determined by the variance of their weighted residuals.
Let $Z = Y - f(x,a)$ denote the residual.
The estimators differ only in the weights applied to $Z$: $\hat{V}_{\mathrm{RADiM}}$ uses the assignment-based weight $w_{\mathrm{DiM}}$, while $\hat{V}_{\Delta\mathrm{-DR}}$ uses the policy-aware weight $w_{\Delta}$.

Applying the pointwise weight identity established in Theorem~3.1 to the residual variable $Z$ instead of the direct outcome $Y$ yields:
\begin{equation}
    \Delta\mathrm{Var} = \mathrm{Var}(\hat{V}_{\mathrm{RADiM}}) - \mathrm{Var}(\hat{V}_{\Delta\mathrm{-DR}}) = \mathbb{E}\left[ (w_{\mathrm{DiM}}^2 - w_{\Delta}^2) Z^2 \right].
\end{equation}
Substituting the strictly non-negative weight difference derived in Theorem 1, leads to:
\begin{equation}
    \Delta\mathrm{Var} = \mathbb{E}_{x}\left[ \sum_{a \in \mathcal{A}} \frac{\pi(a|x)\pi'(a|x)}{p(1-p)\pi_0(a|x)^2} \mathbb{E}[Z^2 | x,a] \right].
\end{equation}
Since probabilities, $p(1-p)$, and the expected squared residual $\mathbb{E}[Z^2|x,a]$ are all non-negative, the variance gap is guaranteed to be non-negative.
This implies that $\Delta$-DR can only reduce the variance obtained via RADiM.
\end{proof}

\subsection{Deriving an Optimal Allocation Ratio}
Standard A/B-testing practice recommends a balanced split ($p=0.5$), as it maximises power under the assumption that outcome variances are homoskedastic~\cite{Kohavi2014}.
In contrast, the variance of our policy-aware estimators depends on the structural relationship between policies.
This implies that we can seek the optimal allocation ratio $p \in (0,1)$ that minimises the total estimation variance for a given online experiment.
This, in turn, maximises statistical power and reduces the cost of running the experiment.

For the general case covering both $\Delta\beta^{\star}$-IPS and $\Delta$-DR, we minimise the variance of the weighted variable $Z$ (where $Z=Y$ for IPS, and $Z=Y-f(x,a)$ for DR).
The variance as a function of $p$ is then:
\begin{equation}\label{eq:variance_treatment_alloc}
    J(p) = \mathop{\mathbb{E}}_{x\sim \mathsf{P}(X)} \left[ \sum_{a \in \mathcal{A}} \frac{\Delta(a|x)^2}{p\pi(a|x) + (1-p)\pi'(a|x)} \mathbb{E}[Z^2|x,a] \right].
\end{equation}
Assuming homoskedasticity on $Y$, the term $\mathbb{E}[Z^2|x,a]$ is constant and does not affect the location of the optimum for standard IPS.
In the case of $\Delta$-DR, however, this term represents the residual variance of the reward model. 
Including it ensures that the optimal allocation $p^{\star}$ shifts traffic towards the policy that samples regions where the reward model is least accurate (i.e. has high residual variance), in an attempt to stabilise the overall estimator.

Since $f(y)=1/y$ is strictly convex, the objective $J(p)$ remains strictly convex in $p$ regardless of the weighting by $\mathbb{E}[Z^2|x,a]$.

In practice, assuming a reasonable degree of stationarity in the context distribution $\mathsf{P}(X)$, practitioners can leverage existing system logs to estimate these moments pre-experiment. 
Estimating the residual variance under a production policy yields an approximation for $p^{\star}$ under $\Delta$-DR, ignoring this term yields the optimal $p^{\star}$ under $\Delta\beta^{\star}$-IPS.
We can then apply search or gradient-based optimisation procedures to obtain the variance-optimal ratio a priori.

\paragraph{Adaptive Allocation.}
The above derivation assumes homoskedasticity on $Y$.
Without this assumption, the optimum $p^\star$ relies on a quantity that is unknown prior to the experiment.
To mitigate this, one can consider a practical two-stage adaptive design.
In a first stage, deploy the experiment with a balanced split $p_0=0.5$ to collect an initial sample.
Using this data, empirically estimate the weighted residual variances for both treatment and control groups to solve for the empirical optimum $\hat{p}^\star$.
This lifts the homoskedasticity assumption on $Y$, but induces a stationarity assumption on $\mathsf{P}(X)$.
Note that this procedure does not constitute ``peeking''~\cite{Johari2017}---as the allocation update depends on the empirical variance and not the ATE estimate.
Furthermore, because the $\Delta$-IPS and $\Delta$-DR estimators rely on importance sampling, they remain unbiased for time-varying logging policies, provided that the propensity scores $\pi_0^{(t)}$ used in the estimator accurately reflect the assignment probabilities at each time step $t$ and are treated appropriately~\cite{Oosterhuis2021,Elvira2019, Kallus2021}.
This could allow practitioners to dynamically optimise statistical power without inflating Type-I error rates~\cite{Kohavi2024} in real-world experiments.

\subsection{$\Delta$-MRDR: Optimal Variance Reduction}
Standard regression approaches train the DR reward model $f_\theta$ (parameterised by $\theta$) to minimise the Mean Squared Error (MSE) on the outcome $y$. 
However, minimising model prediction error does not necessarily minimise the variance of the resulting estimator~\cite{Farajtabar2018}.
To address this, we propose the \textbf{$\Delta$-MRDR} (More Robust Doubly Robust) objective, which adapts the model loss specifically to minimise the variance of the $\Delta$-DR estimator defined in Eq.~\ref{eq:delta_dr}.

Since the estimator is unbiased, minimising its variance is equivalent to minimising the second moment of the weighted residual term.
Neglecting the variance of the control variate (which is constant with respect to the action assignment given the context), the optimisation problem becomes:
\begin{equation}
    \theta^{\star} = \mathop{\mathrm{arg\,min}}_{\theta} \mathbb{E}_{(x,a,y) \sim \mathcal{D}} \left[ \left( \frac{\Delta(a|x)}{\pi_0(a|x)} (y - f_\theta(x,a)) \right)^2 \right].
\end{equation}
This yields a Weighted Least Squares (WLS) objective, where the sample weights are given by the squared policy-aware importance weights:
\begin{equation} \label{eq:mrdr}
    \mathcal{L}_{\Delta\text{-MRDR}}(\theta) = \frac{1}{|\mathcal{D}|} \sum_{(x,a,y) \in \mathcal{D}} w_{\Delta}(x,a)^2 (y - f_\theta(x,a))^2.
\end{equation}

Unlike standard regression which treats all samples equally, $\Delta$-MRDR weights the loss by the squared policy divergence $w_{\Delta}^2 \propto (\pi(a|x) - \pi'(a|x))^2$.
Consequently, the objective forces the model to focus its limited capacity on the regions of disagreement between the policies. 
Accurate prediction in regions where policies agree (i.e. $\Delta(a|x) \approx 0$) contributes negligibly to variance reduction and is effectively down-weighted.
This aligns the learning objective with the overarching experimental goal: orienting model capacity towards the context-action-reward triplets that affect the ATE.

\subsection{$\Delta$-DCG: Evaluating Ranking Policies}\label{sec:delta_dcg}
Our work so far has focused on contextual bandit settings where a policy selects a single action.
Modern recommender systems, however, typically serve ranked lists of items to users.
To combat a combinatorial explosion of the action space, models of user behaviour are typically adopted~\cite{Li2018}.

Recent work has shown that Discounted Cumulative Gain (DCG) admits an interpretation as an importance sampling estimator of expected online reward under a position-based click model (PBM) with known exposure probabilities~\cite{Jeunen2023_nDCG}. Under this view, ranking evaluation reduces to off-policy estimation with \emph{exposure} propensities. We leverage this perspective to derive a $\Delta$-DCG estimator for statistically efficient A/B-testing of ranking policies.

Let $\mathcal{R}$ and $\mathcal{R}'$ be two ranking policies we wish to compare.  
Given a context $x$, each policy induces expected exposure for item $a$~\cite{Diaz2020}:
\[
\varepsilon_\mathcal{R}(x,a), \qquad \varepsilon_{\mathcal{R}'}(x,a),
\]
where $\varepsilon_\mathcal{R}(x,a)$ denotes the probability that policy $\mathcal{R}$ exposes item $a$ to the user.
This combines both selection bias due to the stochastic ranking policy, as well as position bias in the system.

As before, the A/B-test logging policy is a mixture:
\[
\varepsilon_0(x,a) = p\,\varepsilon_\mathcal{R}(x,a) + (1-p)\,\varepsilon_{\mathcal{R}'}(x,a).
\]

The common support assumption for IPS is trivially satisfied as before, i.e.
if $\varepsilon_\mathcal{R}(x,a) > 0$ or $\varepsilon_{\mathcal{R}'}(x,a) > 0$, then $\varepsilon_0(x,a) > 0$.

Under the PBM, reward decomposes as:
\[
y = v \cdot q(x,a),
\]
where $v$ is a position-dependent binary examination variable and $q(x,a)$ represents intrinsic ``quality'' or relevance.
In this setting, DCG can be considered an unbiased estimator of an online reward metric.
This requires independence assumptions among ranks and trajectories, for which we refer to the work of \citet{Jeunen2023_nDCG}.

Let $\mathcal{D}= \{(x_i, \bm{a}_i, \bm{y}_i)\}_{i=1}^{N}$ be logged interactions under the mixture policy---where $\bm{a}=(a_1,\ldots,a_K)$ and $\bm{y} = (y_1,\ldots,y_K)$ represent vectorised actions and rewards across positions $1\leq j\leq K$.  
The expected DCG difference between $\mathcal{R}$ and $\mathcal{R}'$ is given by:
\[
\Delta_{\mathrm{DCG}} = \mathop{\mathbb{E}}\limits_{\bm{a} \sim \mathcal{R}} \left[ \sum_{j=1}^{K}y_j \right] - \mathop{\mathbb{E}}\limits_{\bm{a} \sim \mathcal{R}'} \left[ \sum_{j=1}^{K}y_j \right].
\]

By leveraging the exposure propensities, we define the marginal policy-aware importance weight for an item $a$ at position $j$ as:
\[
w_{\Delta}(x, a_j) = \frac{\varepsilon_{\mathcal{R}}(x,a_j) - \varepsilon_{\mathcal{R}'}(x,a_j)}{\varepsilon_0(x,a_j)}.
\]
We can then unbiasedly estimate the difference:
\[
\hat V_{\Delta\mathrm{-DCG}} = \frac{1}{|\mathcal{D}|} \sum_{(x,\bm{a},\bm{y}) \in \mathcal{D}} \sum_{j=1}^{K} w_{\Delta}(x, a_j) y_j.
\]

This estimator is the ranking analogue of $\Delta$-IPS under the PBM, replacing action propensities with exposure propensities that incorporate both selection and position bias. The variance of $\hat V_{\Delta\mathrm{-DCG}}$ depends on the squared marginal importance weights $w_{\Delta}(x, a_j)^2$.

\paragraph{Optimal Baselines for Rankings ($\Delta\beta^\star$-DCG)}
While the marginalisation inherent to the PBM prevents the combinatorial explosion of slate-level importance weights, the estimator still suffers from high variance when policy divergence is large.
To mitigate this, we can introduce an additive control variate directly into the marginal formulation.
Analogous to the optimal scalar baseline $\beta^\star$ defined for the single-action setting, we can derive a variance-minimising baseline for the ranking scenario, as previously proposed for the Item-Position model~\cite{Jeunen2026_AdditiveDominates}.
As the authors note, a globally optimal vectorised additive control variate is non-trivial to estimate.
Instead, we adopt a position-specific approximation that assumes independence among ranks as:
\[
\beta_{\indep,j}^\star = \frac{\mathbb{E} \left[ w_{\Delta}(x,a_j)^2 y_j \right]}{\mathbb{E} \left[ w_{\Delta}(x,a_j)^2 \right]},
\]
yielding the $\Delta\beta_{\indep}^\star$-DCG estimator:
\[
\hat V_{\Delta\beta_{\indep}^\star\mathrm{-DCG}} = \frac{1}{|\mathcal{D}|} \sum_{(x,\bm{a},\bm{y}) \in \mathcal{D}} \sum_{j=1}^{K} w_{\Delta}(x, a_j) (y_j - \beta_{\indep,j}^\star).
\]

By shifting the outcome at each rank based on the global average reward for that specific position, weighted by policy divergence, $\Delta\beta_{\indep}^\star$-DCG substantially reduces variance compared to $\Delta$-DCG. 

When the two ranking policies induce similar exposure probabilities for most items (high \emph{policy overlap} in exposure space), the numerator of $w_\Delta$ is small. Thus, impressions where both rankers behave similarly are effectively down-weighted, mirroring the overlap effect observed in the single-action setting, and ensuring that the estimator focuses strictly on the ATE. Conversely, when policies disagree strongly (large exposure divergence), impressions receive larger weights. In these high-divergence regimes, the position-specific $\beta_{\indep, j}^\star$ baseline is critical to constrain the variance explosion inherently associated with importance sampling.

%% file: 4.Experiments.tex
\begin{figure*}[!t]
    \centering
    \includegraphics[width=\linewidth]{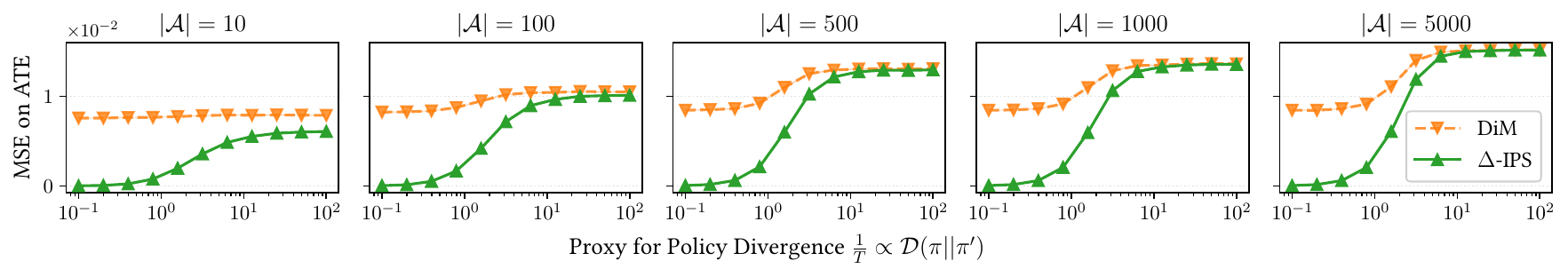}
    \caption{Mean Squared Error (MSE) of the standard DiM estimator and the Policy-Aware $\Delta$-IPS estimator for varying action space sizes $|\mathcal{A}|$. The $x$-axis represents policy divergence (proxied by the inverse temperature $\frac{1}{T}$ of $\pi^{\prime}$). We observe that the $\Delta$-OPE view successfully exploits policy overlap to significantly reduce estimation error, remaining robust across large action spaces.}
    \label{fig:bias_variance}
\end{figure*}

\section{Empirical Validation}
\label{sec:experiments}

We validate our theoretical findings through a series of controlled simulations.
Unlike fixed public datasets where the ground truth counterfactuals are unobserved, synthetic environments allow us to compute the exact bias and variance of our estimators against the true Average Treatment Effect (ATE). Our experiments are designed to answer the following four research questions:

\begin{description}
    \item[\textbf{RQ1 (Baseline Estimator Efficiency)}:]~\\\textit{Does the Policy-Aware $\Delta$-OPE view reduce variance compared to the standard Difference-in-Means (DiM) estimator, and how does this reduction scale with action space size $|\mathcal{A}|$?}
    \item[\textbf{RQ2 (Optimal Experiment Design)}:]~\\\textit{Does the theoretical variance proxy $J(p)$ accurately predict empirical estimation error? Can we achieve lower variance by deviating from the standard even split in asymmetric settings?}
    \item[\textbf{RQ3 (Optimal Variance Reduction)}:] ~\\\textit{Does the proposed $\Delta$-MRDR loss function yield control variates that reduce estimator variance more effectively than $\Delta$-DR?}
    \item[\textbf{RQ4 (Extensions to Ranking)}:]~\\\textit{Do the proposed $\Delta$-DCG estimator and its additive control variate extension reliably reduce estimation variance?}
\end{description}
To aid in the reproducibility of our empirical results, the source code to reproduce all figures and numbers reported in this Section is available at \href{https://github.com/olivierjeunen/delta-OPE-ABTesting/}{github.com/olivierjeunen/delta-OPE-ABTesting/}.

\subsection{RQ1: Estimator Efficiency \& Scalability}
To assess the baseline performance of our estimator, we simulate a contextual bandit setting with a linear reward structure ($d=5$).
We vary the action space size $|\mathcal{A}| \in \{10, 100, 500, 1\,000, 5\,000\}$.
We vary the inverse temperature $\frac{1}{T}$ of a Softmax policy $\pi^{\prime}$, which is a proxy for the policy divergence w.r.t. a uniformly random policy $\pi$, (i.e. $\lim_{\frac{1}{T}\to0} \pi^\prime\equiv\pi$).
We simulate an A/B-testing scenario with $5\,000$ users, and repeat this $1\,000$ times to reduce sampling variation.

\textit{Result.}
Figure~\ref{fig:bias_variance} illustrates the Mean Squared Error (MSE) of the standard DiM estimator versus the proposed $\Delta$-IPS estimator. Consistent with Theorem 3.1, the policy-aware estimator strictly dominates DiM. Notably, in the high-overlap regime ($\frac{1}{T} \to 0$), the variance of $\Delta$-IPS approaches zero as the control variate term perfectly cancels out the common noise. Furthermore, Figure~\ref{fig:bias_variance} demonstrates robustness to the ``curse of dimensionality'': the variance reduction persists even as the action space grows to $|\mathcal{A}|=5\,000$, confirming that $\Delta$-IPS is suitable for larger-scale settings.
Nevertheless, we see strong potential for further variance reduction in larger action spaces by combining the $\Delta$-OPE view with recent advances in the OPE literature~\cite{Saito2022_MIPS,Saito2023}.

\begin{figure}[!t]
    \centering
    \includegraphics[width=\linewidth]{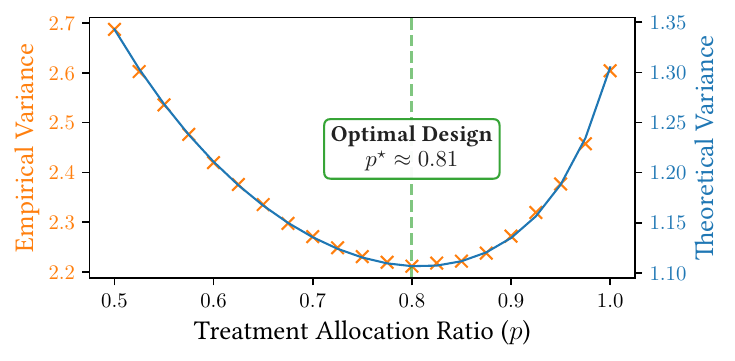}
    \caption{Visualising optimal treatment allocation ($p^\star$) in an asymmetric policy setting. We compare a conservative control policy $\pi'$ against an exploratory treatment policy $\pi$.
    We validate that empirical estimation variance tracks the theoretical variance proxy $J(p)$ derived in Eq.~\ref{eq:variance_treatment_alloc} up to a multiplicative constant. The minimum variance is achieved at $p^\star \approx 0.81$, reducing variance by $18\%$ compared to an even split.}
    \label{fig:optimal_design}
\end{figure}

\subsection{RQ2: Optimal Treatment Allocation}
Standard A/B-testing practice recommends an equal traffic split ($p=0.5$), as this maximises the statistical power of the DiM estimator~\cite{Kohavi2014}.
Theory suggests that when comparing policies with asymmetric support under the $\Delta$-OPE framework (e.g. a deterministic production model vs. a high-entropy exploration model), the optimal design is non-uniform.

\textit{Setup.}
We compare a conservative control policy $\pi'$ (allocating 95\% mass to a single ``safe'' action) against an exploratory treatment policy $\pi$ (allocating 50\% mass uniformly across 100 actions). We vary the traffic allocation $p \in [0.05, 0.95]$ and measure the empirical variance of the $\Delta$-IPS estimator over $N=5\,000,000$ trials.

\textit{Result.}
Figure~\ref{fig:optimal_design} validates our optimal design theory. The empirical variance perfectly tracks the theoretical variance $J(p)$ derived in Eq.~\ref{eq:variance_treatment_alloc} (up to a constant factor).
The minimum variance is achieved at $p^\star \approx 0.81$, allocating significantly more traffic to the exploratory treatment policy.
This optimal split yields a variance reduction of approximately 18\% compared to the standard 50/50 split.
This confirms that experimental design should proportionally favour higher-entropy policies to ensure adequate coverage of the action space.
As $J(p)$ can be estimated from pre-experiment data, this confirms that $p^{\star}$ can be estimated effectively and efficiently.

\begin{figure}[!t]
    \centering
    \includegraphics[width=\linewidth]{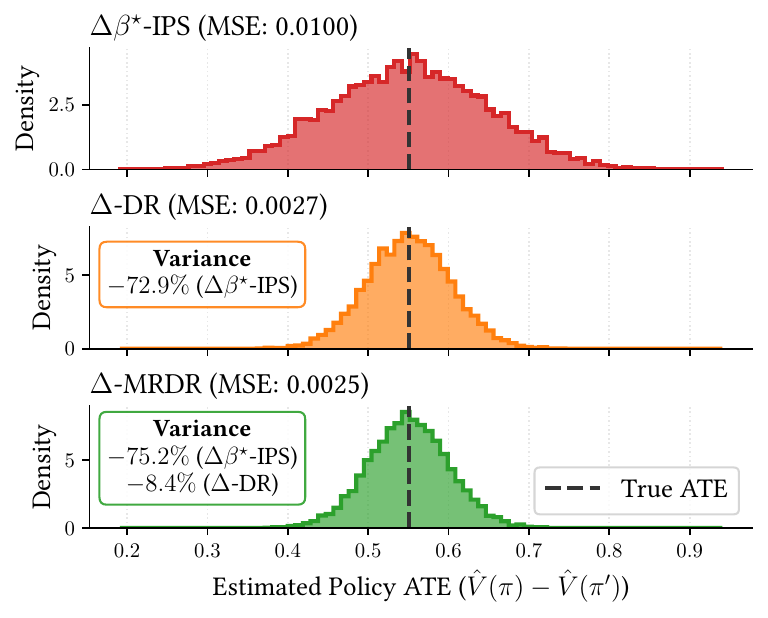}
    \caption{Variance reduction via the $\Delta$-MRDR objective in a highly heterogeneous environment. The $\Delta\beta^\star$-IPS estimator serves as the optimal model-free baseline. The $\Delta$-DR estimator minimises global MSE, causing it to overfit to the noisy majority and mischaracterise the baseline for the target group, resulting in suboptimal variance. The $\Delta$-MRDR estimator targets model capacity exclusively at the regions of high policy divergence, learning the correct local relationship and significantly reducing ATE estimation variance.}
        \label{fig:mrdr_results}
\end{figure}
\subsection{RQ3: Variance Reduction via $\Delta$-MRDR}
We evaluate whether weighting the regression loss by the policy divergence reduces the variance of the resulting Doubly Robust (DR) estimator, particularly when model capacity is constrained.

\textit{Setup.}
We simulate a highly heterogeneous recommendation environment where a latent intent feature $q \in (0, 1)$ governs both user behaviour and observation noise.
The majority of the traffic consists of low-intent users whose outcomes are highly stochastic.
A small minority consists of high-intent users whose outcomes have lower variance.
This heteroskedasticity provides an environment in which we expect uniformly weighted DR to be suboptimal.

We compare a target policy $\pi$ that aggressively targets high-intent users against a passive control baseline $\pi'$ using data logged from a standard $50/50$ A/B-test.
We train Ridge regression models with moderate regularization to force a strict bias-variance trade-off, utilising a two-stage sample splitting procedure to ensure unbiased evaluation:
\begin{enumerate}
    \item \textbf{$\Delta$-DR Model:} Minimises global MSE,
    \item \textbf{$\Delta$-MRDR Model:} Minimises the weighted MSE with sample weights $w_i^2 = \left(\frac{\pi(a_i|x_i) - \pi'(a_i|x_i)}{\pi_0(a_i|x_i)}\right)^2$.
\end{enumerate}

\textit{Result.}
Figure~\ref{fig:mrdr_results} compares the distribution of ATE estimates over $10\,000$ trials.
First, we see that the $\Delta$-DR estimator significantly reduces the variance of the $\Delta\beta^{\star}$-IPS estimator whilst retaining unbiasedness.
Nevertheless, with the regularisation term constraining model capacity, the reward model focuses on the majority of users, who are low-intent and noisy.
In contrast, the $\Delta$-MRDR model focuses on users with a large causal impact for the ATE in policy change.
This targeted learning approach reduces the estimator variance by $75\%$ and $8\%$ compared to $\Delta\beta^{\star}$ and $\Delta$-DR respectively.
These results demonstrate that the best model for prediction is not necessarily the best model for causal estimation.

\begin{figure}[!t]
    \centering
    \includegraphics[width=\linewidth]{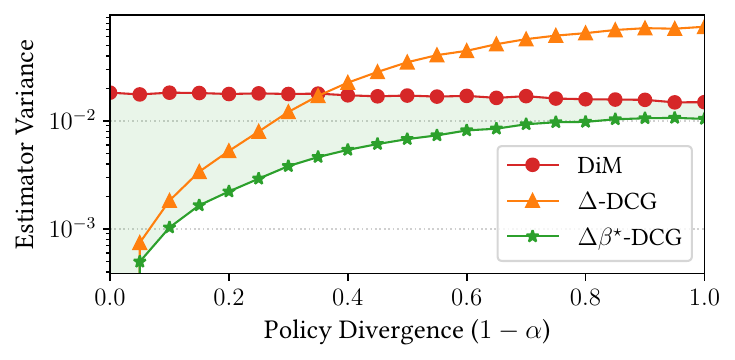}
    \caption{Empirical variance of top-$K$ ranking estimators across varying levels of policy divergence ($1-\alpha$), shown on a logarithmic scale. The standard A/B testing procedure (DiM) suffers from high variance regardless of policy overlap. The unbaselined $\Delta$-DCG estimator  performs well under high overlap but suffers from severe variance explosion as divergence increases. The $\Delta\beta^\star$-DCG estimator utilises position-specific optimal baselines to mitigate this explosion, achieving significant variance reduction compared to the standard A/B test across all divergence regimes.}
    \label{fig:dcg_variance}
\end{figure}
\subsection{RQ4: Variance Reduction in Ranking}\label{sec:exp_delta_dcg}

Finally, we evaluate the performance of our exposure-aware estimators in combinatorial action spaces, specifically focusing on top-$K$ ranking under a Position-Based Model (PBM). We investigate how policy divergence impacts estimation variance and whether the position-specific optimal baseline ($\Delta\beta^\star_{\indep}$-DCG) can successfully mitigate the variance explosion inherent to off-policy ranking evaluation---translating to direct statistical power improvements for online evaluation of ranking policies.

\textit{Setup.}
We simulate a ranking environment with a catalogue of $M=100$ items and a slate size of $K=25$ (i.e. $|\mathcal{A}| \approx 2.4\times10^{23}$).
User contexts and item relevances $q$ are randomly distributed.
We define a target policy $\mathcal{R}$ that perfectly sorts items by $q$, and a control policy $\mathcal{R}'$ whose scores are corrupted by noise. The degree of noise is controlled by an interpolation parameter $\alpha \in [0, 1]$, allowing us to strictly sweep the policy divergence ($1 - \alpha$) from $0.0$ (identical policies) to $1.0$ (independent policies with limited overlap). 

Data is logged using a $50/50$ uniform mixture of $\mathcal{R}$ and $\mathcal{R}'$, with a small $\epsilon=0.1$ uniform exploration mass added to ensure strict positivity of exposure propensities. We compare three estimators over $5\,000$ independent trials: the standard Difference-in-Means (DiM) of the observed A/B-test DCG, the unbaselined $\Delta$-DCG estimator, and the optimally baselined $\Delta\beta_{\indep}^\star$-DCG estimator which computes a distinct $\beta_{\indep,j}^\star$ for each rank position $j$.

\textit{Result.}
Figure~\ref{fig:dcg_variance} illustrates the empirical variance of the estimators (on a logarithmic scale) as policy divergence increases.
The standard A/B-testing (DiM) variance remains relatively constant across all divergence levels, as it simply computes the sample mean difference between the two logging arms without exploiting the structural overlap of the policies.

When policies are highly similar (divergence $\to 0$), both $\Delta$-estimators leverage the substantial overlap in item exposures to achieve near-zero variance, vastly outperforming DiM. However, as policy divergence increases, the exposure propensities for the two policies diverge.
For the unbaselined $\Delta$-DCG estimator, this causes the marginal importance weights to inflate, leading to a variance explosion that surpasses the standard DiM baseline. 

Crucially, the $\Delta\beta_{\indep}^\star$-DCG estimator mitigates this explosion. By shifting the outcome at each position $j$ by the variance-minimising scalar $\beta^\star_{\indep,j}$, the estimator successfully neutralises the variance injected by the importance weights while preserving the benefits of the $\Delta$-OPE formulation. The resulting $\Delta\beta_{\indep}^\star$-DCG estimator maintains a strict variance advantage over the standard A/B-test across the entire divergence spectrum, demonstrating the efficacy of position-aware control variates when evaluating ranking policies.

\subsection{Limitations of our Experimental Design}
The empirical validation presented in this work relies exclusively on synthetic simulation environments.
Whilst this choice affords full control over ground-truth counterfactuals---enabling exact bias and variance measurement against the true ATE, which is unobservable in logged public data---it necessarily entails simplifying assumptions about reward structure, context distributions, and policy behaviour. 
Practitioners should treat the reported variance reduction figures as indicative of the relative ordering of methods, and empirical validation of our theoretical contributions, rather than as precise predictions of real-world uplifts.
Indeed, these will depend on the degree of policy overlap, reward heterogeneity, and the quality of any fitted reward models in their specific deployment.

The absence of publicly available logged datasets suitable for this experimental setting is a structural limitation of the broader field, not unique to this work.
Existing benchmarks for off-policy evaluation in recommendation~\cite{Swaminathan2015_BLBF} tend to involve small action spaces and rely on counterfactual evaluation procedures that are themselves subject to high variance~\cite{Lefortier2016}---making them ill-suited as a reliable ground truth against which to validate estimator efficiency.
In this respect, our simulation approach follows established practice in the OPE literature~\cite{Saito2021_OBP}, where synthetic environments are the standard vehicle for controlled comparison of estimators.

A second limitation concerns the estimation of policy probabilities. The proposed estimators (\textsc{$\Delta$-IPS}, \textsc{$\Delta$-DR}, \textsc{$\Delta$-MRDR}, \textsc{$\Delta$-DCG}) require knowledge of $\pi(a|x)$ and $\pi'(a|x)$ to compute the importance weights $w_\Delta$.
In our simulations, and often in practice, these probabilities are known exactly by construction.
Alternatively, policies can come from stochastic neural models whose exact action probabilities may be expensive to compute~\cite{Jeunen2025_TS} or subject to numerical instability---particularly in large action spaces or in sequence-generation settings such as large language models, where the policy probability over a full output is a product of per-token probabilities.
We note that this is a general challenge for IPS-based methods, and that existing mitigations---such as top-$K$ truncation~\cite{Ionides2008, Roux2025} or embedding-based approximations~\cite{Saito2022_MIPS}---are likely applicable here. Nevertheless, the sensitivity of the proposed estimators to propensity misspecification warrants further empirical investigation.

Finally, the derivation of the variance-optimal traffic allocation $p^\star$ assumes stationarity of the context distribution $\mathsf{P}(X)$ between the pre-experiment logging period and the live experiment, and invokes homoskedasticity of outcomes $Y$ for the analytical solution under $\Delta$-IPS. The adaptive two-stage design we suggest relaxes the latter assumption in practice, but the robustness of $p^\star$ to distribution shift---common in live recommender systems due to seasonal effects, item catalogue changes, or user population drift---remains an open question. We view this as a promising direction for future work, alongside the development of fully sequential adaptive designs that converge to $p^\star$ continuously over the course of the experiment whilst retaining unbiased estimators of ATEs.

%% file: 5.Conclusions.tex
\section{Conclusions \& Outlook}
\label{sec:conclusion}

Our work identifies a structural inefficiency in the standard A/B-testing protocol: when competing policies agree on an action, the resulting outcome contributes noise to the Average Treatment Effect estimate without providing any signal regarding the uplift.
By reframing the randomised treatment assignment as a meta-policy, we established a rigorous connection between online controlled experiments and Off-Policy Evaluation.
This perspective unlocks a family of unbiased, policy-aware estimators ($\Delta$-IPS and $\Delta$-DR) that strictly dominate standard Difference-in-Means (DiM) estimators by exploiting policy overlap.

Our theoretical analysis demonstrates that the variance of these estimators scales with policy divergence rather than direct outcome noise.
This insight enables three practical innovations:
\begin{enumerate*}[label=(\roman*)]
    \item a derivation of the variance-optimal traffic allocation ratio $p^\star$, which can deviate from the standard 50/50 split;
    \item the $\Delta$-MRDR learning objective, which trains control variates to explicitly minimise estimation variance by focusing model capacity on regions of policy disagreement; and
    \item the extension of this framework to combinatorial ranking action spaces via $\Delta$-DCG, leveraging a Position-Based Model and rank-specific optimal baselines ($\Delta\beta_{\indep}^\star$-DCG) to successfully combat variance explosion in Top-$K$ ranking evaluation.
\end{enumerate*}
Empirical validation confirms that these methods significantly reduce estimation variance in both highly heterogeneous single-action environments and complex slate-recommendation scenarios, maintaining strict dominance over standard DiM across varying divergence regimes with minimal engineering overhead.

This work opens several avenues for future research.
First, while our $\Delta\beta_{\indep}^\star$-DCG estimator relies on a marginal approximation for computational stability, future work could explore estimating the full cross-rank covariance matrix to construct a strictly globally optimal joint baseline vector for slate evaluation~\cite{Jeunen2026_AdditiveDominates}.
Second, our optimal design derivation assumes a fixed allocation $p$.
Further exploring an adaptive experimental design, where traffic allocation is dynamically updated to converge to $p^\star$ during the test, could drastically accelerate decision-making.
Third, whilst our work focuses on ATEs, recent extensions of the OPE framework allow us to estimate a full distribution of counterfactual outcomes~\cite{Chandak2021}---which could be integrated directly.
Finally, we see promise in applying $\Delta$-MRDR beyond variance reduction, potentially using it as a robust loss function for offline policy learning in settings with skewed logging policies.
We hope this work encourages the community to view online and offline evaluation not as distinct phases, but as a unified continuum of counterfactual estimation problems.\looseness=-1